
\documentclass{article}
\usepackage[inline]{enumitem}

\usepackage{microtype}
\usepackage{graphicx}
\usepackage{epstopdf}
\usepackage{subfigure}
\usepackage{booktabs} 
\usepackage{extarrows}

\usepackage{amsmath}   
\usepackage{amssymb}
\usepackage{wasysym}
\usepackage{multirow}
\usepackage[dvipsnames]{xcolor}
\usepackage{amsthm}
\usepackage{bbm}
\usepackage{tikz}
\usepackage{enumerate}
\usetikzlibrary{calc}
\usetikzlibrary{arrows.meta}
\usepackage{footmisc}
\newtheorem{theorem}{Theorem}
\newtheorem{lemma}{Lemma}

\usepackage{hyperref}




\newcommand{\amauri}[1]{}
\newcommand{\felix}[1]{}
\newcommand{\kilian}[1]{}
\newcommand{\chris}[1]{}
\newcommand{\tao}[1]{}
\newcommand{\tianyi}[1]{}

\newcommand{\Method}{Simple Graph Convolution}
\newcommand{\method}{SGC}

\definecolor{color_sn}{HTML}{79a6f6}
\definecolor{color_cheby}{HTML}{a8c0f3}
\definecolor{color_gcn}{HTML}{c8daf4}
\definecolor{color_sgr}{HTML}{e1ebf7}
\definecolor{color_perceptron}{HTML}{ffd6d6}
\definecolor{color_mlp}{HTML}{ffbfbf}
\definecolor{color_filter}{HTML}{fff9db}
\definecolor{color_cnn}{HTML}{fff3ba}

\definecolor{myred}{HTML}{e53935}
\definecolor{myblue}{HTML}{0277bd}

\definecolor{modelblue}{HTML}{1034A6}





\usepackage{amsmath,amsfonts,bm}









\def\eqref#1{equation~\ref{#1}}









\def\1{\bm{1}}




\def\rvg{{\mathbf{g}}}

\def\rvv{{\mathbf{v}}}

\def\rvx{{\mathbf{x}}}
\def\rvy{{\mathbf{y}}}


\def\rmA{{\mathbf{A}}}

\def\rmD{{\mathbf{D}}}

\def\rmG{{\mathbf{G}}}
\def\rmH{{\mathbf{H}}}
\def\rmI{{\mathbf{I}}}

\def\rmS{{\mathbf{S}}}

\def\rmU{{\mathbf{U}}}

\def\rmX{{\mathbf{X}}}





\DeclareMathAlphabet{\mathsfit}{\encodingdefault}{\sfdefault}{m}{sl}
\SetMathAlphabet{\mathsfit}{bold}{\encodingdefault}{\sfdefault}{bx}{n}











\newcommand{\softmax}{\mathrm{softmax}}
\newcommand{\relu}{\mathrm{ReLU}}



\DeclareMathOperator{\Tr}{Tr}

\graphicspath{{figures/}}


\usepackage[accepted]{icml2019}

\icmltitlerunning{Simplifying Graph Convolutional Networks}

\begin{document}

\twocolumn[
\icmltitle{Simplifying Graph Convolutional Networks}



\icmlsetsymbol{equal}{*}

\begin{icmlauthorlist}
\icmlauthor{Felix Wu}{equal,cornell}
\icmlauthor{Tianyi Zhang}{equal,cornell}
\icmlauthor{Amauri Holanda de Souza Jr.}{equal,cornell,ifce}
\icmlauthor{Christopher Fifty}{cornell}
\icmlauthor{Tao Yu}{cornell}
\icmlauthor{Kilian Q. Weinberger}{cornell}
\end{icmlauthorlist}

\icmlaffiliation{cornell}{Cornell University}
\icmlaffiliation{ifce}{Federal Institute of Ceara (Brazil)}

\icmlcorrespondingauthor{Felix Wu}{fw245@cornell.edu}
\icmlcorrespondingauthor{Tianyi Zhang}{tz58@cornell.edu}

\icmlkeywords{Machine Learning, ICML}
\vskip 0.3in
]



\printAffiliationsAndNotice{\icmlEqualContribution} 

\begin{abstract}
Graph Convolutional Networks (GCNs) and their variants have experienced significant attention and have become the de facto methods for learning graph representations. 
GCNs derive inspiration primarily from recent deep learning approaches, and as a result, may inherit unnecessary complexity and redundant computation. 
In this paper, we reduce this excess complexity through successively removing nonlinearities and collapsing weight matrices between consecutive layers. 
We theoretically analyze the resulting linear model and show that it corresponds to a fixed low-pass filter followed by a linear classifier. 
Notably, our experimental evaluation demonstrates that these simplifications do not negatively impact accuracy in many downstream applications.
Moreover, the resulting model scales to larger datasets, is naturally interpretable, and yields up to two orders of magnitude speedup over FastGCN.

\end{abstract}

\section{Introduction}
Graph Convolutional Networks (GCNs) \cite{gcn} are an efficient variant of Convolutional Neural Networks (CNNs) on graphs. 
GCNs stack layers of learned first-order spectral filters followed by a nonlinear activation function to learn graph representations.
Recently, GCNs and subsequent variants have achieved state-of-the-art results in various application areas, including but not limited to citation networks \cite{gcn}, social networks \cite{FastGCN}, applied chemistry \cite{liao2018lanczosnet}, natural language processing \cite{textGCN, han2012geolocation, relation-extraction}, and computer vision \cite{wang2018zero, ADGPM}.  

Historically, the development of machine learning algorithms has followed a clear trend from initial simplicity to need-driven complexity. For instance, limitations of the linear Perceptron \cite{rosenblatt1958perceptron} motivated the development of the more complex but also more expressive neural network (or multi-layer Perceptrons, MLPs)~\cite{rosenblatt1961principles}. Similarly, simple pre-defined linear image filters~\cite{sobel19683x3,harris1988combined} eventually gave rise to nonlinear CNNs with learned convolutional kernels ~\cite{waibel1989phoneme,lecun1989backpropagation}. 
As additional algorithmic complexity tends to complicate theoretical analysis and obfuscates understanding, it is typically only introduced  for applications where simpler methods are insufficient.  Arguably, most classifiers in real world applications are still linear (typically logistic regression), which are straight-forward to optimize and easy to interpret.

\begin{figure*}[h!]
    \centering
    \includegraphics[width=1.0\linewidth]{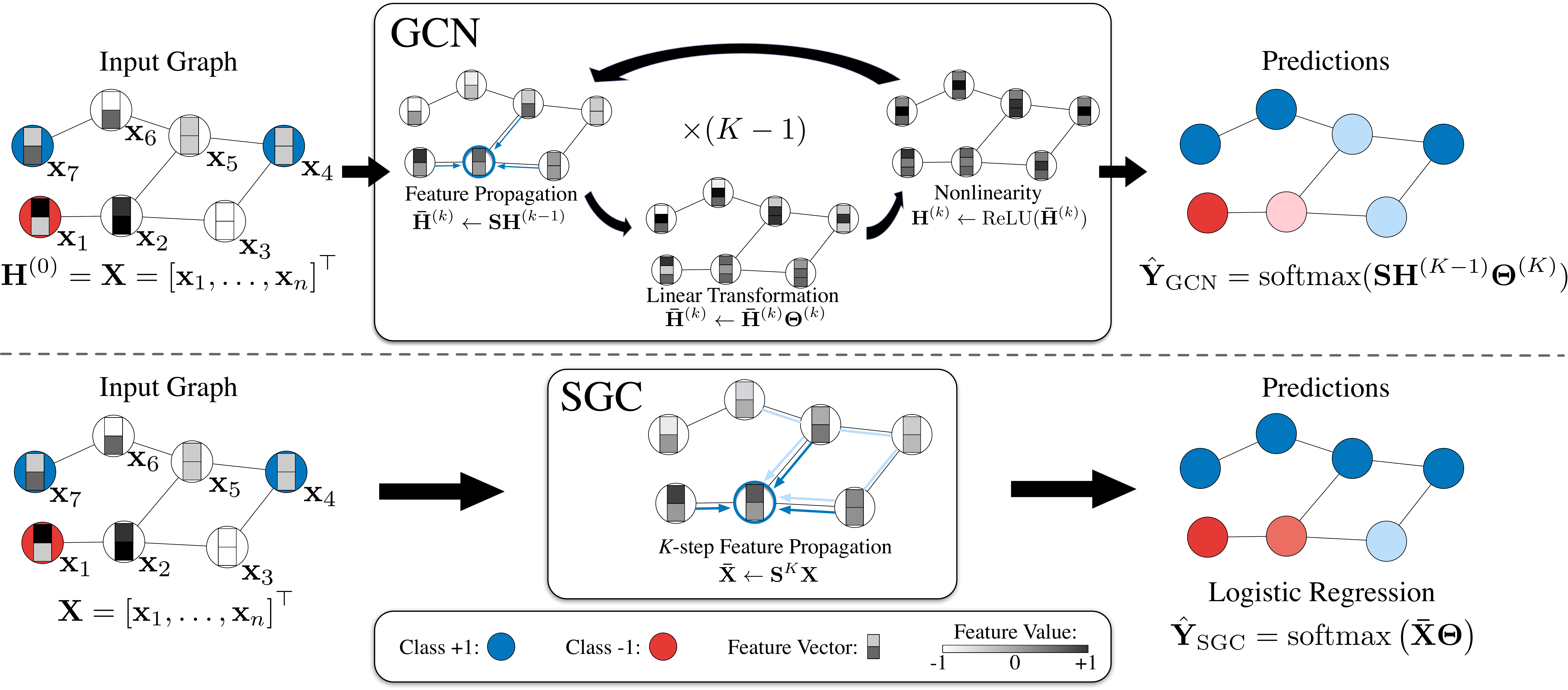}
    \caption{Schematic layout of a GCN v.s. a SGC. \textit{Top row:} The GCN  transforms the feature vectors repeatedly throughout $K$ layers and then applies a linear classifier on the final representation. \textit{Bottom row:}  the \method{} reduces the entire procedure to a simple feature propagation step followed by standard logistic regression. }
    \label{fig:method}
\end{figure*}

However, possibly because GCNs were proposed after the recent ``renaissance" of neural networks, they tend to be a rare exception to this trend. GCNs are built upon multi-layer neural networks, and were never an extension of a simpler (insufficient) linear counterpart. 

In this paper, we observe that GCNs inherit considerable  complexity from their deep learning lineage, which can be burdensome and unnecessary for less demanding applications. Motivated by the glaring historic omission of a simpler predecessor, we aim to derive the simplest linear model that ``could have'' preceded the GCN, had a more ``traditional'' path been taken. We reduce the excess complexity of GCNs by repeatedly removing the nonlinearities between GCN layers and collapsing the resulting function into a single linear transformation. We empirically show that the final linear model exhibits comparable or even superior performance to GCNs on a variety of tasks while being computationally more efficient and fitting significantly fewer parameters. We refer to this simplified linear model as \Method{} (\method{}). 

In contrast to its nonlinear counterparts, the  \method{} is intuitively interpretable and 
we provide a theoretical analysis from the graph convolution perspective. 
Notably, feature extraction in \method{} corresponds to a single fixed filter applied to each feature dimension. 
\citet{gcn} empirically observe that the ``renormalization trick", i.e. adding self-loops to the graph, improves accuracy, and we demonstrate that this method effectively shrinks the graph spectral domain, resulting in a low-pass-type filter when applied to \method{}. 
Crucially, this filtering operation gives rise to locally smooth features across the graph~\cite{Bruna13}.

Through an empirical assessment on node classification benchmark datasets for citation and social networks, we show that the \method{} achieves comparable performance to GCN and other state-of-the-art graph neural networks. However, it is significantly faster, and even outperforms  FastGCN~\citep{FastGCN} by  up to two orders of magnitude on the largest dataset (Reddit) in our evaluation. 
Finally, we demonstrate that \method{} extrapolates its effectiveness to a wide-range of downstream tasks. In particular, \method{} rivals, if not surpasses, GCN-based approaches on text classification, user geolocation, relation extraction, and zero-shot image classification tasks. 
The code is available on Github\footnote{\url{https://github.com/Tiiiger/SGC}}.

\section{\Method{}}
We follow \citet{gcn} to introduce GCNs (and subsequently \method{}) in the context of node classification. Here, GCNs take a graph with some labeled nodes as input and generate label predictions for all graph nodes. Let us formally define such a graph as ${\mathcal{G}} = ({\mathcal{V}}, \rmA)$, where $\mathcal{V}$ represents the vertex set consisting of nodes $\{v_1, \dots, v_n\}$, and 
$\rmA\in\mathbb{R}^{n \times n}$ is a symmetric (typically sparse) adjacency matrix
where $a_{ij}$ denotes the edge weight between nodes $v_i$ and $v_j$. A missing edge is represented through $a_{ij} = 0$.
 We define the degree matrix $\rmD = \text{diag}(d_1, \dots, d_n)$ as a diagonal matrix where each entry on the diagonal is equal to the row-sum of the adjacency matrix $d_i =  \sum_j a_{ij}$.

Each node $v_i$ in the graph has a corresponding $d$-dimensional feature vector $\rvx_i \in \mathbb{R}^{d}$. The entire feature matrix $\rmX \in \mathbb{R}^{n \times d}$ stacks $n$ feature vectors on top of one another, $\rmX=\left[\rvx_1,\dots,\rvx_n\right]^\top$. 
Each node belongs to one out of $C$ classes and can be labeled with a $C$-dimensional one-hot vector $\rvy_i\in\{0,1\}^C$.
We only know the labels of a subset of the nodes and want to predict the unknown labels.

\subsection{Graph Convolutional Networks}
Similar to CNNs or MLPs, GCNs  learn a new feature representation for the feature $\mathbf{x}_i$ of each node over multiple layers, which is subsequently used as input into a linear classifier.  For the $k$-th graph convolution layer, we denote the input node representations of all nodes by the matrix  $\mathbf{H}^{(k-1)}$ and the output node representations $\mathbf{H}^{(k)}$. Naturally, the initial node representations are just the original input features: 
\begin{equation} \label{eq:initial_feature}
    \mathbf{H}^{(0)} = \mathbf{X}, 
\end{equation}
which serve as input to the first GCN layer. 

A $K$-layer GCN is identical to applying a $K$-layer MLP to the feature vector $\mathbf{x}_i$ of each node in the graph, except that the hidden representation of each node is averaged with its neighbors at the beginning of each layer. In each graph convolution layer, node representations are updated in three stages: feature propagation, linear transformation, and a pointwise nonlinear activation (see \autoref{fig:method}). For the sake of clarity, we describe each step in detail.

\paragraph{Feature propagation} is what distinguishes a GCN from an MLP. 
At the beginning of each layer the features $\mathbf{h}_i$ of each node $v_i$ are averaged with  the feature vectors  in its local neighborhood, 
\begin{equation}
    \bar{\mathbf{h}}_i^{(k)} \leftarrow \frac{1}{d_i + 1} \mathbf{h}_i^{(k-1)}+\sum_{j=1}^n \frac{a_{ij}}{\sqrt{(d_i + 1) (d_j + 1)}}\mathbf{h}_j^{(k-1)}.\label{eq:update}
\end{equation}
More compactly, we can express this update over the entire graph as a simple matrix operation.  Let $\rmS$ denote the ``normalized'' adjacency matrix with added self-loops, 
\begin{align} 
\label{eq:propagation_matrix}
    \mathbf{S} & = \tilde{\rmD}^{-\frac{1}{2}} \tilde{\rmA} \tilde{\rmD}^{-\frac{1}{2}},
\end{align}
where $\tilde{\rmA} = \rmA + \rmI$ and $\tilde{\rmD}$ is the degree matrix of $\tilde{\rmA}$. The simultaneous update in~\autoref{eq:update} for all nodes becomes a simple sparse matrix multiplication
\begin{equation}
    \bar{\mathbf{H}}^{(k)} \leftarrow \mathbf{S} \mathbf{H}^{(k-1)}.
\end{equation}
Intuitively, this step smoothes the hidden representations locally along the edges of the graph and ultimately encourages similar predictions among locally connected nodes.

\paragraph{Feature transformation and nonlinear transition.} 
After the local smoothing, a GCN layer is identical to a standard MLP.  Each layer is associated with a learned weight matrix $\boldsymbol{\Theta}^{(k)}$, and the smoothed hidden feature representations are transformed linearly. 
Finally, a nonlinear activation function such as $\relu$ is applied pointwise before outputting feature representation $\rmH^{(k)}$. In summary, the representation updating rule of the $k$-th layer is: 
\begin{align} \label{eq:gcn_propagation}
    \mathbf{H}^{(k)} & \leftarrow  \relu \left( \bar{\mathbf{H}}^{(k)} \boldsymbol{\Theta}^{(k)}\right). 
\end{align}
The pointwise nonlinear transformation of the $k$-th layer is followed by the feature propagation of the $(k+1)$-th layer.
\paragraph{Classifier.} For node classification, and similar to a standard MLP, the last layer of a GCN predicts the labels using a \textit{softmax} classifier. Denote the class predictions for $n$ nodes as $\hat{\mathbf{Y}} \in \mathbb{R}^{n\times C}$ where 
$\hat{y}_{ic}$ denotes the probability of node $i$ belongs to class $c$.
The class prediction $\hat{\mathbf{Y}}$ of a $K$-layer GCN can be written as:
\begin{align}
\hat{\mathbf{Y}}_{\text{GCN}} & = \softmax\left( \rmS \mathbf{H}^{(K-1)} \boldsymbol{\Theta}^{(K)}\right),
\end{align}
where $\softmax(\rvx) = \exp(\rvx) / \sum_{c=1}^C \exp(x_c)$ acts as a normalizer across all classes. 

\subsection{Simple Graph Convolution}
In a traditional MLP, deeper layers increase the expressivity because it allows the creation of feature hierarchies, e.g. features in the second layer build on top of the features of the first layer. In GCNs, the layers have a second important function: in each layer the hidden representations are averaged among neighbors that are one hop away. This implies that after $k$ layers a node obtains feature information from all nodes that are $k-$hops away in the graph. This effect is similar to convolutional neural networks, where depth increases the receptive field of internal features~\cite{hariharan2015hypercolumns}.  Although convolutional networks can benefit substantially from increased depth~\cite{huang2016deep}, typically MLPs obtain little benefit beyond 3 or 4 layers. 

\paragraph{Linearization.}
We hypothesize that the nonlinearity between GCN layers is not critical - but that the majority of the benefit arises from the local averaging. We therefore remove the nonlinear transition functions between each layer and only keep the final softmax (in order to obtain probabilistic outputs). 
The resulting model is linear, but still has the same increased ``receptive field'' of a $K$-layer GCN,
\begin{align}
    \hat{\mathbf{Y}} & = \softmax\left(\mathbf{S}\ldots\mathbf{S}\mathbf{S} \mathbf{X} \boldsymbol{\Theta}^{(1)}\boldsymbol{\Theta}^{(2)}\ldots\boldsymbol{\Theta}^{(K)} \right).
    \end{align}
To simplify notation we can collapse the repeated multiplication with the normalized adjacency matrix $\rmS$ into a single matrix by raising $\rmS$ to the $K$-th power, $\rmS^K$. Further, we can reparameterize our weights into a single matrix  $\boldsymbol{\Theta}=\boldsymbol{\Theta}^{(1)} \boldsymbol{\Theta}^{(2)} \ldots \boldsymbol{\Theta}^{(K)}$.  The resulting classifier becomes
\begin{equation}
    \hat{\mathbf{Y}}_{\text{\method{}}}=\softmax\left(\mathbf{S}^K \mathbf{X} \boldsymbol{\Theta} \right),\label{eq:SGC}
\end{equation}
which we refer to as \Method{} (\method{}). 

\paragraph{Logistic regression.} \autoref{eq:SGC} gives rise to a  natural and intuitive interpretation of \method{}: by distinguishing between feature extraction and classifier, \method{} consists of a fixed (i.e., parameter-free) feature extraction/smoothing component $\bar{\rmX}= \rmS^K \rmX$ followed by a linear logistic regression classifier $\hat{\mathbf{Y}}=\softmax(\bar{\rmX}\boldsymbol{\Theta})$. Since the computation of $\bar{\rmX}$ requires no weight it is essentially equivalent to a feature pre-processing step and the entire training of the model reduces to straight-forward multi-class logistic regression on the pre-processed features $\bar \rmX$.

\paragraph{Optimization details.} The training of logistic regression is a well studied convex optimization problem and can be performed with any efficient second order method or stochastic gradient descent~\cite{bottou2010large}. Provided the graph connectivity pattern  is sufficiently sparse, SGD naturally scales to very large graph sizes and the training of \method{} is  drastically faster than that of GCNs.

\section{Spectral Analysis}
\label{sec:analysis}
We now study \method{} from a graph convolution perspective. We demonstrate that \method{} corresponds to a fixed filter on the graph spectral domain. 
In addition, we show that adding self-loops to the original graph, i.e. the renormalization trick \citep{gcn}, effectively shrinks the underlying graph spectrum.
On this scaled domain, \method{} acts as a low-pass filter that produces smooth features over the graph. As a result, nearby nodes tend to share similar representations and consequently predictions.

\subsection{Preliminaries on Graph Convolutions}
Analogous to the Euclidean domain, graph Fourier analysis relies on the spectral decomposition of graph Laplacians. The graph Laplacian $\boldsymbol{\Delta} = \rmD - \rmA$ (as well as its normalized version $\boldsymbol{\Delta}_{\text{sym}} = \rmD^{-1/2}\boldsymbol{\Delta} \rmD^{-1/2}$) is a symmetric positive semidefinite matrix with eigendecomposition $\boldsymbol{\Delta} = \rmU \boldsymbol{\Lambda} \rmU^{\top}$, where $\rmU \in \mathbb{R}^{n \times n}$ comprises orthonormal eigenvectors and $\boldsymbol{\Lambda} = \text{diag}(\lambda_1, \dots, \lambda_n)$ is a diagonal matrix of eigenvalues. The eigendecomposition of the Laplacian allows us to define the Fourier transform equivalent on the graph domain, where eigenvectors denote Fourier modes and eigenvalues denote frequencies of the graph. In this regard, let $\rvx \in \mathbb{R}^n$ be a signal defined on the vertices of the graph. We define the graph Fourier transform of $\rvx$ as $\hat{\rvx} = \rmU^{\top}\rvx$, with inverse operation given by $\rvx = \rmU \hat{\rvx}$.
Thus, the graph convolution operation between signal $\rvx$ and filter $\rvg$ is
\begin{equation} \label{eq:spectral_conv_definition}
    \rvg * \rvx = \rmU \left( (\rmU^\top \rvg) \odot (\rmU^\top \rvx) \right) = \rmU \hat{\rmG} \rmU^\top \rvx,
\end{equation}
where $\hat{\rmG} = \text{diag}\left(\hat{g}_1, \dots, \hat{g}_n\right)$ denotes a diagonal matrix in which the diagonal corresponds to spectral filter coefficients. 

Graph convolutions can be approximated by $k$-th order polynomials of Laplacians
\begin{equation}
\label{eq:polynomial approximation}
    \rmU\hat{\rmG}\rmU^\top\rvx \approx \sum_{i=0}^{k} \theta_i \boldsymbol{\Delta}^i \rvx = \rmU \left( \sum_{i=0}^{k} \theta_i \boldsymbol{\Lambda}^i \right) \rmU^\top \rvx,
\end{equation}
where $\theta_i$ denotes coefficients. In this case, filter coefficients correspond to polynomials of the Laplacian eigenvalues, i.e., $\hat{\rmG} = \sum_i \theta_i \boldsymbol{\Lambda}^i$ or equivalently $\hat{g}(\lambda_j) = \sum_i \theta_i \lambda_j^i$.

Graph Convolutional Networks (GCNs) \cite{gcn} employ an affine approximation ($k=1$) of \autoref{eq:polynomial approximation} with coefficients $\theta_0=2\theta$ and $\theta_1=-\theta$ from which we attain the basic GCN convolution operation
\begin{equation} \label{eq:first-order-cheby}
    \rvg * \rvx = \theta (\rmI + \rmD^{-1/2} \rmA \rmD^{-1/2}) \rvx.
\end{equation}

In their final design, \citet{gcn} replace the matrix $\rmI + \rmD^{-1/2} \rmA \rmD^{-1/2}$ by a normalized version $\tilde{\rmD}^{-1/2} \tilde{\rmA} \tilde{\rmD}^{-1/2}$ where $\tilde{\rmA} = \rmA + \rmI$ and consequently $\tilde{\rmD} = \rmD + \rmI$, dubbed the \textit{renormalization trick}. Finally, by generalizing the convolution to work with multiple filters in a $d$-channel input and layering the model with nonlinear activation functions between each layer, we have the GCN propagation rule as defined in \autoref{eq:gcn_propagation}.

\begin{figure*}[tb] 
\centering
\includegraphics[width=.8\linewidth]{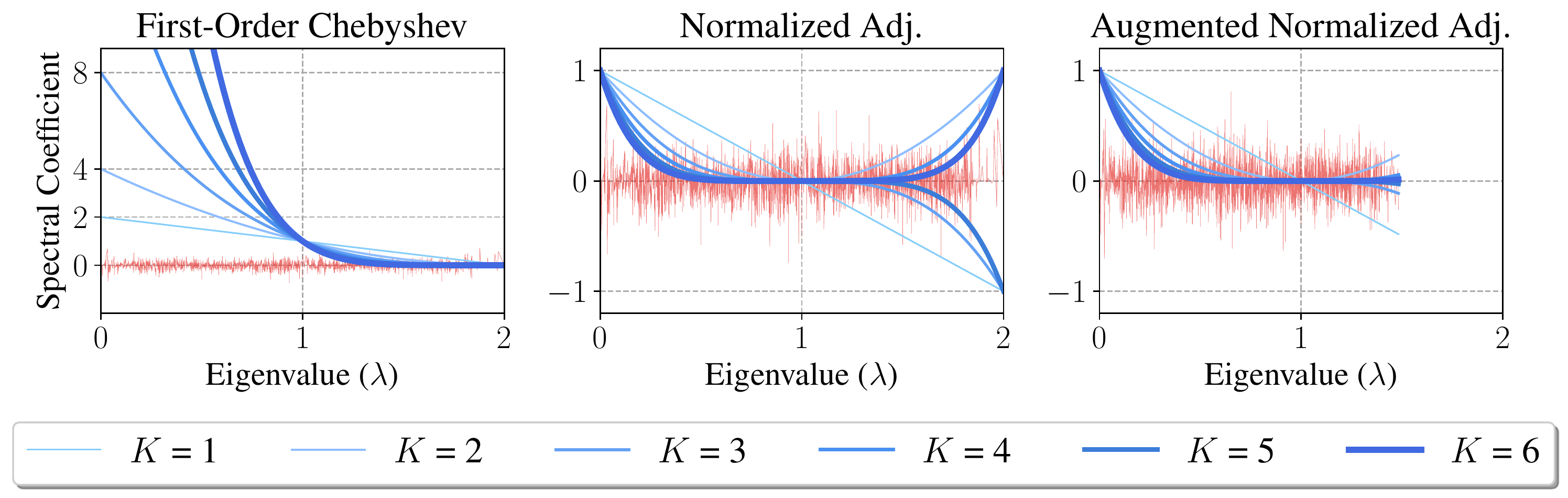}
\caption{Feature ({\color{myred}red}) and filters ({\color{myblue}blue}) spectral coefficients for different propagation matrices on Cora dataset ($3$rd feature).}
\label{fig:s_filters}
\end{figure*}

\subsection{\method{} and Low-Pass Filtering}
The initial first-order Chebyshev filter derived in GCNs corresponds to the propagation matrix $\rmS_{\text{1-order}} = \rmI + \rmD^{-1/2} \rmA \rmD^{-1/2}$ (see \autoref{eq:first-order-cheby}). Since the normalized Laplacian is $\boldsymbol{\Delta}_{\text{sym}} = \rmI - \rmD^{-1/2} \rmA \rmD^{-1/2}$, then $\rmS_{\text{1-order}} = 2\rmI - \boldsymbol{\Delta}_{\text{sym}}$. Therefore, feature propagation with $\rmS_{\text{1-order}}^K$ implies filter coefficients $\hat{g}_i = \hat{g}(\lambda_i) = (2 - \lambda_i)^K$, where $\lambda_i$ denotes the eigenvalues of $\boldsymbol{\Delta}_{\text{sym}}$. \autoref{fig:s_filters} illustrates the filtering operation related to $\rmS_{\text{1-order}}$ for a varying number of propagation steps $K \in \{1,\dots, 6\}$. As one may observe, high powers of $\rmS_{\text{1-order}}$ lead to exploding filter coefficients and undesirably over-amplify signals at frequencies $\lambda_i < 1$.

To tackle potential numerical issues associated with the first-order Chebyshev filter, \citet{gcn} propose the \textit{renormalization trick}. Basically, it consists of replacing $\rmS_{\text{1-order}}$ by the normalized adjacency matrix after adding self-loops for all nodes. We call the resulting propagation matrix the augmented normalized adjacency matrix $\tilde{\rmS}_{\text{adj}} = \tilde{\rmD}^{-1/2}\tilde{\rmA}\tilde{\rmD}^{-1/2}$, where $\tilde{\rmA}=\rmA+\rmI$ and $\tilde{\rmD}=\rmD+\rmI$. Correspondingly, we define the augmented normalized Laplacian $\tilde{\boldsymbol{\Delta}}_{\text{sym}} = \rmI -  \tilde{\rmD}^{-1/2}\tilde{\rmA}\tilde{\rmD}^{-1/2}$. Thus, we can describe the spectral filters associated with $\tilde{\rmS}_{\text{adj}}$ as a polynomial of the eigenvalues of the underlying Laplacian, i.e.,  $\hat{g}(\tilde{\lambda}_i) = (1 - \tilde{\lambda}_i)^K$, where $\tilde{\lambda}_i$ are eigenvalues of $\tilde{\boldsymbol{\Delta}}_{\text{sym}}$.

We now analyze the spectrum of $\tilde{\boldsymbol{\Delta}}_{\text{sym}}$ and show that adding self-loops to graphs shrinks the spectrum (eigenvalues) of the corresponding normalized Laplacian.

\begin{theorem}\label{thm:Lapla_eig_shrink} Let $\rmA$ be the adjacency matrix of an undirected, weighted, simple graph $\mathcal{G}$ without isolated nodes and with corresponding degree matrix $\rmD$. Let $\tilde{\rmA} = \rmA + \gamma \rmI$, such that $\gamma > 0$, be the augmented adjacency matrix with corresponding degree matrix $\tilde{\rmD}$. Also, let $\lambda_1$ and $\lambda_n$ denote the smallest and largest eigenvalues of $\boldsymbol{\Delta}_{\text{sym}}=\rmI -  \rmD^{-1/2}\rmA\rmD^{-1/2}$; similarly, let $\tilde{\lambda}_1$ and $\tilde{\lambda}_n$ be the smallest and largest eigenvalues of $\tilde{\boldsymbol{\Delta}}_{\text{sym}}=\rmI -  \tilde{\rmD}^{-1/2}\tilde{\rmA}\tilde{\rmD}^{-1/2}$. We have that
\begin{equation}
0 = \lambda_1 = \tilde{\lambda}_1 < \tilde{\lambda}_n < \lambda_n. \label{eq:corollary_aug_laplacian}
\end{equation}
\end{theorem}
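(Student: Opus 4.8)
The plan is to reduce the whole chain \autoref{eq:corollary_aug_laplacian} to variational (Rayleigh quotient) characterizations of the extreme eigenvalues, and then to exploit the single structural fact that adding self-loops changes the degree normalization but leaves the ``edge-difference'' energy of the Laplacian quadratic form untouched.

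First I would dispatch the two left equalities $0 = \lambda_1 = \tilde{\lambda}_1$. Both $\boldsymbol{\Delta}_{\text{sym}}$ and $\tilde{\boldsymbol{\Delta}}_{\text{sym}}$ are positive semidefinite: writing $\rvv = \rmD^{1/2}\rvx$ (a bijection, since $\rmD$ is invertible because there are no isolated nodes) gives $\rvv^\top \boldsymbol{\Delta}_{\text{sym}} \rvv = \rvx^\top(\rmD - \rmA)\rvx = \tfrac12\sum_{i,j} a_{ij}(x_i - x_j)^2 \ge 0$, using symmetry and nonnegativity of the weights $a_{ij}$; the identical computation applies to the augmented matrix. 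Hence both smallest eigenvalues are $\ge 0$, and each equals $0$ because $\rmD^{1/2}\mathbf{1}$ (resp. $\tilde{\rmD}^{1/2}\mathbf{1}$) is an eigenvector with eigenvalue $0$, as $\rmA\mathbf{1} = \rmD\mathbf{1}$ (resp. $\tilde{\rmA}\mathbf{1} = \tilde{\rmD}\mathbf{1}$) by definition of the degree matrix.

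The heart of the argument is the strict inequality $\tilde{\lambda}_n < \lambda_n$, for which I would apply the same substitution $\rvv = \rmD^{1/2}\rvx$ to the Rayleigh characterization $\lambda_n = \max_{\rvv \ne 0}(\rvv^\top \boldsymbol{\Delta}_{\text{sym}} \rvv)/(\rvv^\top \rvv)$, obtaining the generalized quotient
\begin{equation*}
\lambda_n = \max_{\rvx \ne 0}\frac{N(\rvx)}{D(\rvx)}, \qquad N(\rvx) = \tfrac12\sum_{i,j}a_{ij}(x_i - x_j)^2, \quad D(\rvx) = \sum_i d_i x_i^2.
\end{equation*}
The key observation is that $\tilde{\rmD} - \tilde{\rmA} = \rmD - \rmA$ (the $\gamma\rmI$ cancels), so the augmented numerator is the \emph{same} $N(\rvx)$, whereas $\tilde{\rmD} = \rmD + \gamma\rmI$ inflates the denominator; substituting $\rvv = \tilde{\rmD}^{1/2}\rvx$ thus yields $\tilde{\lambda}_n = \max_{\rvx \ne 0} N(\rvx)/(D(\rvx) + \gamma\|\rvx\|^2)$. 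Intuitively, self-loops introduce no new edge differences but enlarge every degree, so the quotient is damped.

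Finally I would compare the two maxima. Let $\tilde{\rvx}$ attain the augmented maximum. Since $\mathrm{Tr}(\tilde{\boldsymbol{\Delta}}_{\text{sym}}) = \sum_i d_i/(d_i + \gamma) > 0$ and the matrix is PSD, $\tilde{\lambda}_n > 0$, which forces $N(\tilde{\rvx}) > 0$. Because $\gamma\|\tilde{\rvx}\|^2 > 0$ and $D(\tilde{\rvx}) > 0$ (all $d_i > 0$ as there are no isolated nodes),
\begin{equation*}
\tilde{\lambda}_n = \frac{N(\tilde{\rvx})}{D(\tilde{\rvx}) + \gamma\|\tilde{\rvx}\|^2} < \frac{N(\tilde{\rvx})}{D(\tilde{\rvx})} \le \max_{\rvx \ne 0}\frac{N(\rvx)}{D(\rvx)} = \lambda_n,
\end{equation*}
giving $0 < \tilde{\lambda}_n < \lambda_n$ and closing the chain. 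The step I expect to be the main obstacle is exactly this comparison of two \emph{different} maximizations: the pointwise bound $N(\rvx)/(D(\rvx)+\gamma\|\rvx\|^2) < N(\rvx)/D(\rvx)$ does not transfer to the maxima for free, so the crucial move is to evaluate at the augmented optimizer $\tilde{\rvx}$ and treat it as a merely feasible (suboptimal) point for the original quotient, together with verifying $N(\tilde{\rvx}) > 0$ so that inflating the denominator yields a \emph{strict} drop.
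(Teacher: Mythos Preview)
Your proof is correct and takes a genuinely different route from the paper. The paper splits $\tilde{\rmA} = \rmA + \gamma\rmI$ to write $\tilde{\boldsymbol{\Delta}}_{\text{sym}} = \rmI - \gamma\tilde{\rmD}^{-1} - \tilde{\rmD}^{-1/2}\rmA\tilde{\rmD}^{-1/2}$, bounds the last term via a separate lemma comparing the spectra of $\tilde{\rmD}^{-1/2}\rmA\tilde{\rmD}^{-1/2}$ and $\rmD^{-1/2}\rmA\rmD^{-1/2}$ (through the change of variables $\rvy = \rmD^{1/2}\tilde{\rmD}^{-1/2}\rvx$ and a somewhat delicate ``$\min(AB) \ge \min(A)\max(B)$ when $\min(A) < 0$'' step), and then combines the three pieces with $\max(A-B-C) \le \max A - \min B - \min C$. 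Your argument instead pivots on the single identity $\tilde{\rmD} - \tilde{\rmA} = \rmD - \rmA$: the unnormalized Laplacian is untouched by self-loops, so after the substitutions $\rvv = \rmD^{1/2}\rvx$ and $\rvv = \tilde{\rmD}^{1/2}\rvx$ both top eigenvalues become generalized Rayleigh quotients with the \emph{same} numerator $N(\rvx)$ and denominators differing only by the positive term $\gamma\|\rvx\|^2$; evaluating at the augmented maximizer then gives the strict inequality in one line. This is more elementary, avoids the auxiliary lemma entirely, and makes the mechanism---self-loops inflate every degree but add no edge energy---fully transparent.
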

\autoref{thm:Lapla_eig_shrink} shows that the largest eigenvalue of the normalized graph Laplacian becomes smaller after adding self-loops $\gamma > 0$ (see supplementary materials for the proof).

\autoref{fig:s_filters} depicts the filtering operations associated with the normalized adjacency $\rmS_{\text{adj}} = \rmD^{-1/2}\rmA\rmD^{-1/2}$ and its augmented variant $\tilde{\rmS}_{\text{adj}} =  \tilde{\rmD}^{-1/2}\tilde{\rmA}\tilde{\rmD}^{-1/2}$ on the Cora dataset~\citep{sen2008collective}. Feature propagation with $\rmS_{\text{adj}}$ corresponds to filters $g(\lambda_i) = (1-\lambda_i)^K$ in the spectral range $[0, 2]$; therefore odd powers of $\rmS_{\text{adj}}$ yield negative filter coefficients at frequencies $\lambda_i > 1$. By adding self-loops ($\tilde{\rmS}_{\text{adj}}$), the largest eigenvalue shrinks from $2$ to approximately $1.5$ and then eliminates the effect of negative coefficients. Moreover, this scaled spectrum allows the filter defined by taking powers $K>1$ of $\tilde{\rmS}_{\text{adj}}$ to act as a low-pass-type filters. In supplementary material, we empirically evaluate different choices for the propagation matrix.

\section{Related Works}

\begin{figure*}[tb!] 
\centering
\includegraphics[width=0.8\textwidth]{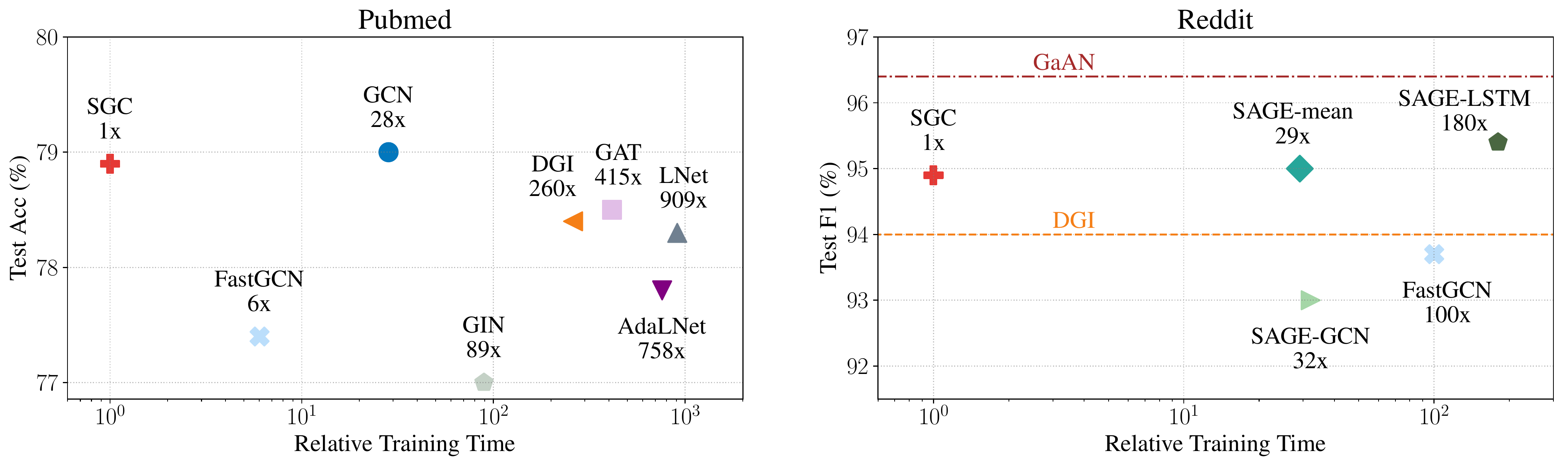}
\caption{Performance over training time on Pubmed and Reddit. \method{} is the fastest while achieving competitive performance. 
We are not able to benchmark the training time of GaAN and DGI on Reddit because the implementations are not released. 
}
\label{fig:run_time}
\end{figure*}
\subsection{Graph Neural Networks}
\citet{Bruna13} first propose a spectral graph-based extension of convolutional networks to graphs. 
In a follow-up work, ChebyNets \cite{Defferrard16} define graph convolutions using Chebyshev polynomials to remove the computationally expensive Laplacian eigendecomposition. GCNs \cite{gcn} further simplify graph convolutions by stacking layers of first-order Chebyshev polynomial filters with a redefined propagation matrix $\mathbf{S}$. 
\citet{FastGCN} propose an efficient variant of GCN based on importance sampling, and \citet{Hamilton17} propose a framework based on sampling and aggregation. 
\citet{dcnn}, \citet{n-gcn}, and \citet{liao2018lanczosnet} exploit multi-scale information by raising $\mathbf{S}$ to higher order.
\citet{xu2018how} study the expressiveness of graph neural networks in terms of their ability to distinguish any two graphs and introduce Graph Isomorphism Network, which is proved to be as powerful as the Weisfeiler-Lehman test for graph isomorphism. 
\citet{Klicpera19} separate the non-linear transformation from propagation by using a neural network followed by a personalized random walk.
There are many other graph neural models~\cite{Monet, EP17, Li18}; we refer to \citet{gnn_review, battaglia2018relational, wu2019comprehensive} for a more comprehensive review. 


Previous publications have pointed out that simpler, sometimes linear models can be effective for node/graph classification tasks. \citet{agnn} empirically show that a linear version of GCN can perform competitively and propose an attention-based GCN variant. \citet{cai2018simple} propose an effective linear baseline for graph classification using node degree statistics. \citet{Buchnik18} show that models which use linear feature/label propagation steps can benefit from self-training strategies. 
\citet{Li2019LabelES} propose a generalized version of label propagation and provide a similar spectral analysis of the renormalization trick.


Graph Attentional Models learn to assign different edge weights at each layer based on node features and have achieved state-of-the-art results on several graph learning tasks \citep{gat, agnn, zhang2018gaan, ADGPM}.
However, the attention mechanism usually adds significant overhead to computation and memory usage. 
We refer the readers to \citet{attention-survey} for further comparison.

\subsection{Other Works on Graphs} 
Graph methodologies can roughly be categorized into two approaches: graph embedding methods and graph laplacian regularization methods. 
Graph embedding methods \citep{Weston2008, Perozzi14, Yang16, infomax} represent nodes as high-dimensional feature vectors. 
Among them, DeepWalk~\citep{Perozzi14} and Deep Graph Infomax (DGI)~\citep{infomax} use unsupervised strategies to learn graph embeddings.
DeepWalk relies on truncated random walk and uses a skip-gram model to generate embeddings, whereas DGI trains a graph convolutional encoder through maximizing mutual information. 
Graph Laplacian regularization \citep{Zhu03, Zhou04,Belkin04b,Belkin2006} introduce a regularization term based on graph structure which forces nodes to have similar labels to their neighbors.
Label Propagation~\citep{Zhu03} makes predictions by spreading label information from labeled nodes to their neighbors until convergence. 

\section{Experiments and Discussion}
\label{sec:experiment}

We first evaluate \method{} on citation networks and social networks and then extend our empirical analysis to a wide range of downstream tasks.
\subsection{Citation Networks \& Social Networks} \label{sec:citation-networks}
We evaluate the semi-supervised node classification performance of \method{} on the Cora, Citeseer, and Pubmed citation network datasets (\autoref{table:citation-base}) \cite{sen2008collective}. 
We supplement our citation network analysis by using \method{} to inductively predict community structure on Reddit (\autoref{table:reddit}), which consists of a much larger graph. Dataset statistics are summarized in \autoref{table:citation-dataset}.

\paragraph{Datasets and experimental setup.}
\begin{table}[tb!]
\small
\centering
\caption{Dataset statistics of the citation networks and Reddit.}
\label{table:citation-dataset}
\begin{tabular}{l|cccccc}
\toprule
Dataset & \# Nodes & \# Edges & Train/Dev/Test Nodes \\
\midrule
Cora & $2,708$ & $5,429$ & $140/500/1,000$\\
Citeseer & $3,327$ & $4,732$ & $120/500/1,000$\\
Pubmed & $19,717$ & $44,338$ & $60/500/1,000$\\
\midrule
Reddit & $233$K & $11.6$M & $152$K/$24$K/$55$K\\
\bottomrule
\end{tabular}
\end{table}
\begin{table}[htb!]
\centering
        \small
        \caption{Test accuracy (\%) averaged over 10 runs on citation networks. $^\dagger$We remove the outliers (accuracy $< 75/65/75 \%$) when calculating their statistics due to high variance.}
        \label{table:citation-base}
        \begin{tabular}{l|c|c|c}
        \toprule
         & Cora & Citeseer & Pubmed \\ 
        \midrule
        \multicolumn{4}{l}{\textbf{Numbers from literature:}} \\
        GCN  & $81.5$ & $70.3$ & $79.0$  \\
        GAT  & $83.0 \pm 0.7$ & $72.5 \pm{0.7}$ & $79.0 \pm{0.3}$ \\
        GLN  & $81.2 \pm 0.1$ & $70.9 \pm{0.1}$ & $78.9 \pm{0.1}$ \\
        AGNN & $83.1 \pm 0.1$ & $71.7 \pm{0.1}$ & $79.9 \pm{0.1}$ \\
        LNet & $79.5 \pm 1.8$ & $66.2 \pm 1.9$  & $78.3 \pm 0.3$ \\
        AdaLNet & $80.4 \pm 1.1$ & $68.7 \pm 1.0$  & $78.1 \pm 0.4$ \\
        DeepWalk & $70.7 \pm 0.6$ & $51.4 \pm 0.5$ & $76.8 \pm 0.6$\\
        DGI & $82.3 \pm 0.6$ & $71.8 \pm 0.7$ & $76.8 \pm 0.6$ \\
         \midrule
        \multicolumn{4}{l}{\textbf{Our experiments:}} \\
        GCN & $81.4 \pm{0.4}$ & $70.9\pm 0.5$ & $79.0 \pm{0.4}$ \\
        GAT & $83.3 \pm{0.7}$ & $72.6 \pm{0.6}$ &  $78.5 \pm{0.3}$ \\
        FastGCN & $79.8 \pm{0.3}$ & $68.8 \pm{0.6} $ & $77.4 \pm{0.3}$ \\
        GIN & $77.6 \pm 1.1$ &  $66.1 \pm 0.9$ & $77.0 \pm 1.2$ \\
        LNet & $80.2 \pm 3.0^\dagger$ & $67.3 \pm 0.5$  & $78.3 \pm 0.6^\dagger$ \\
        AdaLNet & $81.9 \pm 1.9^\dagger$ & $70.6 \pm 0.8^\dagger$ & $77.8 \pm 0.7^\dagger$ \\
        DGI & $82.5 \pm 0.7$ & $71.6 \pm 0.7 $ & $78.4 \pm 0.7$\\
        {\color{modelblue} \method{}} & $81.0 \pm 0.0$ & $71.9 \pm 0.1$ & $78.9 \pm 0.0$ \\
         \bottomrule
        \end{tabular}
\end{table}
\begin{table}[htb!]
        \centering
        \small
        \caption{Test Micro F1 Score (\%) averaged over 10 runs on Reddit. Performances of models are cited from their original papers. \textbf{OOM:} Out of memory.}
        \label{table:reddit}
        \begin{tabular}{l|l|l}
        \toprule
        Setting & Model & Test F1 \\
         \midrule
        \multirow{5}{*}{\shortstack[c]{Supervised}}
        & GaAN  & $96.4$ \\
        & SAGE-mean & $95.0$\\
        & SAGE-LSTM & $95.4$\\
        & SAGE-GCN & $93.0$\\
        & FastGCN & $93.7$\\
        & GCN & \textbf{OOM} \\
        \midrule
        \multirow{3}{*}{\shortstack[c]{Unsupervised}} 
        & SAGE-mean & $89.7$ \\
        & SAGE-LSTM & $90.7$\\
        & SAGE-GCN  & $90.8$\\
        & DGI & $94.0$\\
        \midrule
        \multirow{2}{*}{\shortstack[c]{No Learning}} 
        & Random-Init DGI & $93.3$ \\
        & {\color{modelblue} \method{}} & $94.9$ \\
        \bottomrule
        \end{tabular}
\end{table}
On the citation networks, we train \method{} for 100 epochs using Adam~\citep{adam} with learning rate 0.2. In addition, we use weight decay and tune this hyperparameter on each dataset using hyperopt~\citep{hyperopt} for 60 iterations on the public split validation set. 
Experiments on citation networks are conducted \emph{transductively}. 
On the Reddit dataset, we train \method{} with L-BFGS \cite{lbfgs} using no regularization, and remarkably, training converges in 2 steps. 
We evaluate \method{} \emph{inductively} by following \citet{FastGCN}: we train \method{} on a subgraph comprising only training nodes and test with the original graph.
On all datasets, we tune the number of epochs based on both convergence behavior and validation accuracy.

\paragraph{Baselines.} For citation networks, we compare against GCN~\citep{gcn}
GAT~\citep{gat}
FastGCN~\citep{FastGCN}
LNet, AdaLNet~\citep{liao2018lanczosnet} 
and DGI~\citep{infomax} using the publicly released implementations.
Since GIN is not initially evaluated on citation networks, we implement GIN following ~\citet{xu2018how} and use hyperopt to tune weight decay and learning rate for 60 iterations. 
Moreover, we tune the hidden dimension by hand.

For Reddit, we compare \method{} to the reported performance of GaAN~\cite{zhang2018gaan}, supervised and unsupervised variants of GraphSAGE~\cite{Hamilton17}, FastGCN, and DGI. \autoref{table:reddit} also highlights the setting of the feature extraction step for each method. 
We note that \method{} involves no learning because the feature extraction step, $\rmS^K\rmX$, has no parameter. Both unsupervised and no-learning approaches train logistic regression models with labels afterward.

\paragraph{Performance.}
Based on results in \autoref{table:citation-base} and \autoref{table:reddit}, we conclude that \method{} is very competitive. 
Table~\ref{table:citation-base} shows the performance of \method{} can match the performance of GCN and state-of-the-art graph networks on citation networks.
In particular on Citeseer, \method{} is about 1\% better than GCN, and we reason this performance boost is caused by \method{} having fewer parameters and therefore suffering less from overfitting.
Remarkably, GIN performs slight worse because of overfitting. Also, both LNet and AdaLNet are unstable on citation networks.
On Reddit, \autoref{table:reddit} shows that \method{} outperforms the previous sampling-based GCN variants, SAGE-GCN and FastGCN by more than 1\%. 

Notably, \citet{infomax} report that the performance of a randomly initialized DGI encoder nearly matches that of a trained encoder; however, both models underperform \method{} on Reddit.   
This result may suggest that the extra weights and nonlinearities in the DGI encoder are superfluous, if not outright detrimental. 

\paragraph{Efficiency.} 
In \autoref{fig:run_time}, we plot the performance of the state-of-the-arts graph networks over their training time relative to that of \method{} on the Pubmed and Reddit datasets. In particular, we precompute $\rmS^K\rmX$ and the training time of \method{} takes into account this precomputation time.
We measure the training time on a NVIDIA GTX 1080 Ti GPU and present the benchmark details in supplementary materials.

On large graphs (e.g. Reddit), GCN cannot be trained due to excessive memory requirements. 
Previous approaches tackle this limitation by either sampling to reduce neighborhood size~\cite{FastGCN, Hamilton17} or limiting their model sizes~\cite{infomax}.
By applying a fixed filter and precomputing $\rmS^K\rmX$, \method{} minimizes memory usage and only learns a single weight matrix during training. Since $\rmS$ is typically sparse and $K$ is usually small, we can exploit fast sparse-dense matrix multiplication to compute $\rmS^K\rmX$.
 \autoref{fig:run_time} shows that \method{} can be trained up to two orders of magnitude faster than fast sampling-based methods while having little or no drop in performance.

\subsection{Downstream Tasks}
We extend our empirical evaluation to 5 downstream applications --- text classification, semi-supervised user geolocation, relation extraction, zero-shot image classification, and graph classification --- to study the applicability of \method{}. 
We describe experimental setup in supplementary materials.

\begin{table}[tb!]
\centering
\small
\caption{Test Accuracy (\%) on text classification datasets. The numbers are averaged over 10 runs.}
\begin{tabular}{l|l|cc}
\toprule
Dataset & Model & Test Acc. $\uparrow$ & Time (seconds) $\downarrow$ \\
\midrule
\multirow{2}{*}{20NG} & GCN & $87.9 \pm{0.2}$ & $1205.1 \pm{144.5}$ \\ & {\color{modelblue} \method{}} & $88.5 \pm{0.1}$ & $19.06 \pm{0.15}$ \\
\midrule
\multirow{2}{*}{R8} & GCN & $97.0 \pm{0.2}$ & $129.6 \pm{9.9}$ \\ & {\color{modelblue} \method{}} & $97.2 \pm{0.1}$ & $1.90 \pm{0.03}$ \\
\midrule
\multirow{2}{*}{R52} & GCN & $93.8 \pm{0.2}$ & $245.0 \pm{13.0}$ \\ & {\color{modelblue} \method{}} & $94.0 \pm{0.2}$ & $3.01 \pm{0.01}$ \\
\midrule
\multirow{2}{*}{Ohsumed} & GCN & $68.2 \pm{0.4}$ & $252.4 \pm{14.7}$ \\ & {\color{modelblue} \method{}} & $68.5 \pm{0.3}$ & $3.02 \pm{0.02}$ \\
\midrule
\multirow{2}{*}{MR} & GCN & $76.3 \pm{0.3}$ & $16.1 \pm{0.4}$ \\ & {\color{modelblue} \method{}} & $75.9 \pm{0.3}$ & $4.00 \pm{0.04}$ \\
\bottomrule
\end{tabular}
\label{table:text-base-time}
\end{table}

\paragraph{Text classification} assigns labels to documents. 
\citet{textGCN} use a 2-layer GCN to achieve state-of-the-art results by creating a corpus-level graph which treats both documents and words as nodes in a graph. 
Word-word edge weights are pointwise mutual information (PMI) and word-document edge weights are normalized TF-IDF scores. 
\autoref{table:text-base-time} shows that an SGC ($K=2$) rivals their model on 5 benchmark datasets, while being up to $83.6\times$ faster.
\begin{table}[t!]
\centering
\small
\caption{Test accuracy (\%) within 161 miles on semi-supervised user geolocation. The numbers are averaged over 5 runs.}
\resizebox{\linewidth}{!}{%
\begin{tabular}{l|l|rrrr}
\toprule
Dataset & Model & Acc.@161$\uparrow$ & Time $\downarrow$ \\
\midrule
\multirow{2}{*}{GEOTEXT} & GCN+H & $60.6 \pm 0.2$ & $153.0$s\\
& {\color{modelblue} \method{}} & $61.1 \pm 0.1$ & $5.6$s\\
\midrule
\multirow{2}{*}{TWITTER-US} & GCN+H & $61.9 \pm 0.2$ & $9$h $54$m \\
& {\color{modelblue} \method{}} & $62.5 \pm 0.1$ & $4$h $33$m  \\
\midrule
\multirow{2}{*}{TWITTER-WORLD} & GCN+H & $53.6 \pm 0.2$ & $2$d $05$h $17$m \\
& {\color{modelblue} \method{}} & $54.1 \pm 0.2$ & $22$h $53$m \\
\bottomrule
\end{tabular}
}
\label{table:geo_result}
\end{table}

\paragraph{Semi-supervised user geolocation} locates the ``home'' position of users on social media given users' posts, connections among users, and a small number of labelled users. \citet{Rahimi18} apply GCNs with highway connections on this task and achieve close to state-of-the-art results.
\autoref{table:geo_result} shows that \method{} outperforms GCN with highway connections on GEOTEXT \citep{eisenstein2010latent}, TWITTER-US \citep{roller2012supervised}, and TWITTER-WORLD \citep{han2012geolocation} under \citet{Rahimi18}'s framework, while saving $30+$ hours on TWITTER-WORLD.

\begin{table}[t!]
\centering
\caption{Test Accuracy (\%) on Relation Extraction. The numbers are averaged over 10 runs.}
\label{table:relation-base}
\begin{tabular}{l|c}
\toprule
TACRED & Test Accuracy $\uparrow$ \\
\midrule
C-GCN \citep{relation-extraction} & $66.4$ \\
C-GCN & $66.4 \pm{0.4}$\\
{\color{modelblue} C-\method{}} & $67.0 \pm{0.4}$\\
 \bottomrule
\end{tabular}
\end{table}

\paragraph{Relation extraction} involves predicting the relation between subject and object in a sentence.
\citet{relation-extraction} propose C-GCN which uses an LSTM~\citep{LSTM} followed by a GCN and an MLP.
We replace GCN with SGC ($K=2$) and call the resulting model C-SGC. \autoref{table:relation-base} shows that C-SGC sets new state-of-the-art on TACRED~\citep{TACRED}.

\begin{table}[t!]
\centering
\small
\caption{Top-1 accuracy (\%) averaged over 10 runs in the 2-hop and 3-hop setting of the zero-shot image task on ImageNet. ADGPM~\citep{ADGPM} and EXEM 1-nns~\citep{EXEM} use more powerful visual features.}
\label{table:zero_shot}
\resizebox{\linewidth}{!}{%
\begin{tabular}{l|l|l|l}
\toprule
Model & \# Param. $\downarrow$ & 2-hop Acc. $\uparrow$ & 3-hop Acc. $\uparrow$\\
 \midrule
\multicolumn{4}{l}{\textbf{Unseen categories only:}} \\
EXEM 1-nns & - & $27.0$ & $7.1$ \\ 
ADGPM & - & $26.6$ & $6.3$ \\
GCNZ  & - & $19.8$  & $4.1$ \\
GCNZ (ours) & $9.5M$ & $20.9 \pm 0.2$  & $4.3 \pm 0.0$ \\
{\color{modelblue} MLP-\method{}Z (ours)}  & $4.3M$ &  $21.2 \pm 0.2$ & $4.4 \pm 0.1$ \\
\midrule
\multicolumn{4}{l}{\textbf{Unseen categories \& seen categories:}} \\
ADGPM & - & $10.3$ & $2.9$ \\
GCNZ  & - & $9.7$ & $2.2$ \\
GCNZ (ours) & $9.5M$ & $10.0 \pm 0.2$ & $2.4 \pm 0.0$ \\
{\color{modelblue} MLP-\method{}Z (ours)} & $4.3M$ &  $10.5 \pm 0.1$ & $2.5 \pm 0.0$ \\
 \bottomrule
\end{tabular}
}
\end{table}

\paragraph{Zero-shot image classification} consists of learning an image classifier without access to any images or labels from the test categories. 
GCNZ~\citep{wang2018zero} uses a GCN to map the category names --- based on their relations in WordNet~\citep{miller1995wordnet} --- to image feature domain, and find the most similar category to a query image feature vector.
\autoref{table:zero_shot} shows that replacing GCN with an MLP followed by \method{} can improve performance while reducing the number of parameters by $55\%$.
We find that an MLP feature extractor is necessary in order to map the pretrained GloVe vectors to the space of visual features extracted by a ResNet-50.
Again, this downstream application demonstrates that learned graph convolution filters are superfluous; similar to \citet{EXEM}'s observation that GCNs may not be necessary.

\paragraph{Graph classification} requires models to use graph structure to categorize graphs.
\citet{xu2018how} theoretically show that GCNs are not sufficient to distinguish certain graph structures and show that their GIN is more expressive and achieves state-of-the-art results on various graph classification datasets. We replace the GCN in DCGCN~\citep{zhang2018end} with an \method{} and get $71.0\%$ and $76.2\%$ on NCI1 and COLLAB datasets~\citep{yanardag2015deep} respectively, which is on par with an GCN counterpart, but far behind GIN. Similarly, on QM8 quantum chemistry dataset~\citep{ramakrishnan2015electronic}, more advanced AdaLNet and LNet~\citep{liao2018lanczosnet} get $0.01$ MAE on QM8, outperforming \method{}'s $0.03$ MAE by a large margin.

\section{Conclusion}
In order to better understand and explain the mechanisms of GCNs, we explore the simplest possible formulation of a graph convolutional model, \method{}. The algorithm is almost trivial, a graph based pre-processing step  followed by standard multi-class logistic regression. However, the performance of \method{} rivals --- if not surpasses --- the performance of GCNs and state-of-the-art graph neural network models across a wide range of graph learning tasks.
Moreover by precomputing the fixed feature extractor $\rmS^K$, training time is reduced to a record low.
For example on the Reddit dataset, \method{} can be trained up to two orders of magnitude faster than sampling-based GCN variants. 

In addition to our empirical analysis, we analyze \method{} from a convolution perspective and manifest this method as a low-pass-type filter on the spectral domain. Low-pass-type filters capture low-frequency signals, which corresponds with smoothing features across a graph in this setting. 
Our analysis also provides insight into the empirical boost of the ``renormalization trick" and demonstrates how shrinking the spectral domain leads to a low-pass-type filter which underpins \method{}. 

Ultimately, the strong performance of \method{} sheds light onto GCNs. It is likely that the expressive power of GCNs originates primarily from the repeated graph propagation (which \method{} preserves) rather than the nonlinear feature extraction (which it doesn't.) 

Given its empirical performance, efficiency, and interpretability, we argue that the \method{} should be highly beneficial to the community in at least three ways:
(1) as a first model to try, especially for node classification tasks; 
(2) as a simple baseline for comparison with future graph learning models; 
(3) as a starting point for future research in graph learning --- returning to the historic machine learning practice to develop complex from simple models.

\section*{Acknowledgement}
This research is supported in part by grants from the National
Science Foundation (III-1618134, III-1526012, IIS1149882,
IIS-1724282, and TRIPODS-1740822), the Office
of Naval Research DOD (N00014-17-1-2175), 
Bill and Melinda Gates Foundation, and 
Facebook Research. We are thankful for
generous support by SAP America Inc. 
Amauri Holanda de Souza Jr. thanks CNPq (Brazilian Council for Scientific and Technological Development) for the financial support.
We appreciate the discussion with Xiang Fu, Shengyuan Hu, Shangdi Yu, Wei-Lun Chao and Geoff Pleiss as well as the figure design support from Boyi Li.


\bibliography{references}
\bibliographystyle{icml2019}

\clearpage

\twocolumn[   

\icmltitle{Simplifying Graph Convolutional Networks \\ (Supplementary Material)}

]

\appendix

\section{The spectrum of $\tilde{\boldsymbol{\Delta}}_{\text{sym}}$}

The normalized Laplacian defined on graphs with self-loops, $\tilde{\boldsymbol{\Delta}}_{\text{sym}}$, consists of an instance of generalized graph Laplacians and hold the interpretation as a difference operator, i.e. for any signal $\rvx \in \mathbb{R}^n$ it satisfies 
\begin{equation}
(\tilde{\boldsymbol{\Delta}}_{\text{sym}} \rvx)_i = \sum_{j} \frac{\tilde{a}_{ij}}{\sqrt{d_i + \gamma}} \left(\frac{x_i}{\sqrt{d_i + \gamma}} - \frac{x_j}{\sqrt{d_j + \gamma}}\right). \nonumber
\end{equation}

Here, we prove several properties regarding its spectrum.

\begin{lemma}
(Non-negativity of $\tilde{\boldsymbol{\Delta}}_{\text{sym}}$) The augmented normalized Laplacian matrix is symmetric positive semi-definite.
\end{lemma}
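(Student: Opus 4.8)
The plan is to verify the two defining properties separately: symmetry first, then nonnegativity of the associated quadratic form. Symmetry is immediate, since $\mathcal{G}$ is undirected the augmented adjacency matrix $\tilde{\rmA} = \rmA + \gamma\rmI$ is symmetric, $\tilde{\rmD}$ is diagonal (hence symmetric), so the product $\tilde{\rmD}^{-1/2}\tilde{\rmA}\tilde{\rmD}^{-1/2}$ is symmetric; subtracting it from $\rmI$ preserves symmetry, and therefore $\tilde{\boldsymbol{\Delta}}_{\text{sym}} = \rmI - \tilde{\rmD}^{-1/2}\tilde{\rmA}\tilde{\rmD}^{-1/2}$ is symmetric.

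For positive semi-definiteness I would compute the quadratic form $\rvx^\top \tilde{\boldsymbol{\Delta}}_{\text{sym}} \rvx$ for an arbitrary $\rvx \in \mathbb{R}^n$ and show it is nonnegative. Starting from the difference-operator expression for $(\tilde{\boldsymbol{\Delta}}_{\text{sym}}\rvx)_i$ recorded above, the key step is the change of variables $y_i = x_i / \sqrt{d_i + \gamma}$, which absorbs the degree normalization. After substitution and using $\tilde{a}_{ij}$ for the entries of $\tilde{\rmA}$, the form collapses to the single weighted sum $\rvx^\top \tilde{\boldsymbol{\Delta}}_{\text{sym}}\rvx = \sum_{i,j} \tilde{a}_{ij}\, y_i (y_i - y_j)$ over the augmented edges.

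The final step is the standard Laplacian symmetrization. Because $\tilde{a}_{ij} = \tilde{a}_{ji}$, I would average the double sum against its index-swapped copy to obtain
\[
\rvx^\top \tilde{\boldsymbol{\Delta}}_{\text{sym}}\rvx = \tfrac{1}{2}\sum_{i,j}\tilde{a}_{ij}(y_i - y_j)^2.
\]
Since all edge weights are nonnegative ($\tilde{a}_{ij} = a_{ij} \geq 0$ for $i \neq j$) and the self-loop terms contribute nothing because $(y_i - y_i)^2 = 0$, every summand is nonnegative. Hence the quadratic form is $\geq 0$ for all $\rvx$, and combined with the symmetry established above this shows $\tilde{\boldsymbol{\Delta}}_{\text{sym}}$ is symmetric positive semi-definite.

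I do not anticipate a serious obstacle here; the argument is the classical one for graph Laplacians, merely applied to the self-loop-augmented weights. The only points requiring care are the bookkeeping in the change of variables and confirming that the added diagonal mass $\gamma$ drops out of the final sum of squares, so that the nonnegativity of the original edge weights alone suffices.
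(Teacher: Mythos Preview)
Your proposal is correct and follows essentially the same route as the paper: both compute the quadratic form, use the symmetry $\tilde{a}_{ij}=\tilde{a}_{ji}$ to symmetrize, and arrive at the identical sum-of-squares expression $\tfrac{1}{2}\sum_{i,j}\tilde{a}_{ij}\bigl(x_i/\sqrt{d_i+\gamma}-x_j/\sqrt{d_j+\gamma}\bigr)^2\ge 0$. The only cosmetic differences are that you introduce the substitution $y_i=x_i/\sqrt{d_i+\gamma}$ explicitly and address symmetry separately, whereas the paper leaves the normalized variables inline and omits the (trivial) symmetry verification.
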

\begin{proof}

The quadratic form associated with $\tilde{\boldsymbol{\Delta}}_{\text{sym}}$ is
\begin{align}
    & \rvx^\top\tilde{\boldsymbol{\Delta}}_{\text{sym}}\rvx  = \sum_i x_i^2 - \sum_i \sum_j\frac{\tilde{a}_{ij} x_i x_j}{\sqrt{(d_i + \gamma)(d_j + \gamma)}} \nonumber \\
    & = \frac{1}{2} \left( \sum_i x_i^2 + \sum_j x_j^2  - \sum_i \sum_j \frac{2\tilde{a}_{ij} x_i x_j}{\sqrt{(d_i + \gamma)(d_j + \gamma)}}\right) \nonumber \\
    & = \frac{1}{2} \left(\sum_i \sum_j \frac{\tilde{a}_{ij} x_i^2}{d_i + \gamma} + \sum_j \sum_i \frac{\tilde{a}_{ij} x_j^2}{d_j + \gamma} \right.
     \nonumber \\ & \left. \quad \quad - \sum_i \sum_j  \frac{2\tilde{a}_{ij} x_i x_j}{\sqrt{(d_i + \gamma)(d_j + \gamma)}}\right) \nonumber \\
    &  = \frac{1}{2} \sum_i \sum_j \tilde{a}_{ij}\left(\frac{x_i}{\sqrt{d_i + \gamma}} - \frac{x_j}{\sqrt{d_j + \gamma}}\right)^2 \geq 0 \label{eq:norm_laplacian_self_loops}
\end{align}

\end{proof}

\begin{lemma}
\label{lem:0_eig}
$0$ is an eigenvalue of both $\boldsymbol{\Delta}_{\text{sym}}$ and $\tilde{\boldsymbol{\Delta}}_{\text{sym}}$.
\end{lemma}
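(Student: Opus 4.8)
The plan is to exhibit an explicit nonzero vector in the kernel of each matrix, which immediately certifies that $0$ is an eigenvalue. The guiding observation is that the constant vector $\vone$ lies in the kernel of the combinatorial Laplacian $\rmD - \rmA$, since the $i$-th row sum of $\rmA$ equals the degree $d_i$; conjugating by $\rmD^{1/2}$ transports this fact to the symmetric normalized setting.

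Concretely, for $\boldsymbol{\Delta}_{\text{sym}} = \rmI - \rmD^{-1/2}\rmA\rmD^{-1/2}$ I would take the candidate $\rvx = \rmD^{1/2}\vone$ and compute
\[
\rmD^{-1/2}\rmA\rmD^{-1/2}\rmD^{1/2}\vone = \rmD^{-1/2}\rmA\vone = \rmD^{-1/2}\vd = \rmD^{1/2}\vone,
\]
using $\rmA\vone = \vd$, where $\vd$ is the degree vector. Hence $\boldsymbol{\Delta}_{\text{sym}}\rvx = \rvx - \rvx = \vzero$, so $0$ is an eigenvalue. For the augmented matrix the identical computation goes through verbatim with $\rvx = \tilde{\rmD}^{1/2}\vone$, because by construction the row sums of $\tilde{\rmA}$ are exactly the diagonal entries of $\tilde{\rmD}$, giving $\tilde{\rmA}\vone = \tilde{\vd}$ and therefore $\tilde{\boldsymbol{\Delta}}_{\text{sym}}\rvx = \vzero$.

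An alternative, perhaps cleaner given the preceding lemma, is to invoke the quadratic-form identity already established for $\tilde{\boldsymbol{\Delta}}_{\text{sym}}$ (and its $\gamma=0$ analogue for $\boldsymbol{\Delta}_{\text{sym}}$). Substituting $x_i = \sqrt{d_i+\gamma}$ makes every bracketed difference $x_i/\sqrt{d_i+\gamma} - x_j/\sqrt{d_j+\gamma}$ vanish, so $\rvx^\top\tilde{\boldsymbol{\Delta}}_{\text{sym}}\rvx = 0$; since that lemma shows the matrix is positive semidefinite, any vector annihilating its quadratic form must lie in the null space. Thus $0$ is an eigenvalue with eigenvector $\tilde{\rmD}^{1/2}\vone$.

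There is no substantive obstacle here, as the result is standard; the only point I would treat carefully is that the exhibited eigenvector is genuinely nonzero. For $\tilde{\boldsymbol{\Delta}}_{\text{sym}}$ this is automatic, since $\gamma > 0$ forces each $d_i + \gamma > 0$ and hence $\tilde{\rmD}^{1/2}\vone$ has all-positive entries. For $\boldsymbol{\Delta}_{\text{sym}}$ the vector $\rmD^{1/2}\vone$ is nonzero precisely because the graph has no isolated nodes, which guarantees every $d_i > 0$ and makes $\rmD^{-1/2}$ well defined in the first place; I would state this assumption explicitly to keep the argument honest.
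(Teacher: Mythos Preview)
Your proposal is correct and matches the paper's approach: both exhibit $\tilde{\rmD}^{1/2}\vone$ (respectively $\rmD^{1/2}\vone$) as an explicit null vector, the paper by writing $\tilde{\boldsymbol{\Delta}}_{\text{sym}} = \tilde{\rmD}^{-1/2}\boldsymbol{\Delta}\tilde{\rmD}^{-1/2}$ and using $\boldsymbol{\Delta}\vone=\mathbf{0}$, you by the equivalent computation $\rmD^{-1/2}\rmA\rmD^{-1/2}(\rmD^{1/2}\vone)=\rmD^{1/2}\vone$. Your extra care in verifying the eigenvector is nonzero (via the no-isolated-nodes assumption) is a detail the paper omits but is welcome.
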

\begin{proof}
First, note that $\rvv=[1,\ldots,1]^\top$ is an eigenvector of $\boldsymbol{\Delta}$ associated with eigenvalue $0$, i.e., $\boldsymbol{\Delta} \rvv = (\rmD-\rmA)\rvv = \mathbf{0}$.

Also, we have that $\tilde{\boldsymbol{\Delta}}_{\text{sym}} = \tilde{\rmD}^{-1/2}(\tilde{\rmD} - \tilde{\rmA})\tilde{\rmD}^{-1/2} = \tilde{\rmD}^{-1/2}\boldsymbol{\Delta}\tilde{\rmD}^{-1/2}$. Denote $\rvv_1=\tilde{\rmD}^{1/2}\rvv$, then
$$
\tilde{\boldsymbol{\Delta}}_{\text{sym}}\rvv_1 = \tilde{\rmD}^{-1/2}\boldsymbol{\Delta}\tilde{\rmD}^{-1/2}\tilde{\rmD}^{1/2}\rvv = \tilde{\rmD}^{-1/2} \boldsymbol{\Delta} \rvv = \mathbf{0}.
$$

Therefore, $\rvv_1=\tilde{\rmD}^{1/2}\rvv$ is an eigenvector of $\tilde{\boldsymbol{\Delta}}_{\text{sym}}$ associated with eigenvalue $0$, which is then the smallest eigenvalue from the non-negativity of $\tilde{\boldsymbol{\Delta}}_{\text{sym}}$. Likewise, 0 can be proved to be the smallest eigenvalues of $\boldsymbol{\Delta}_{\text{sym}}$.
\end{proof}

\begin{lemma}
\label{lem:adj_eig}
Let $\beta_1 \leq \beta_2 \leq \dots \leq \beta_n$ denote eigenvalues of $\rmD^{-1/2}\rmA \rmD^{-1/2}$ and $\alpha_1 \leq \alpha_2 \leq \dots \leq \alpha_n$ be the eigenvalues of $\tilde{\rmD}^{-1/2}\rmA\tilde{\rmD}^{-1/2}$. Then,
\begin{align} \label{eq:bounds_norm_adj}
    & \alpha_1 \geq \frac{\max_i d_i}{\gamma+\max_i d_i}\beta_1, &\alpha_n \leq \frac{\min_i{d_i}}{\gamma + \min_i{d_i}}.
\end{align}
\end{lemma}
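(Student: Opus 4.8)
The plan is to turn both spectra into generalized eigenvalue problems against diagonal metrics and compare them through a single Rayleigh quotient. Since $\mathcal{G}$ is simple, adding $\gamma$ self-loops leaves all off-diagonal adjacency entries untouched and only shifts degrees to $\tilde d_i = d_i + \gamma$, so $\tilde{\rmD} = \rmD + \gamma\rmI$. Writing $\rmM = \rmD^{-1/2}\rmA\rmD^{-1/2}$ (whose eigenvalues are the $\beta_i$) and $\rmP = \rmD^{1/2}\tilde{\rmD}^{-1/2} = \mathrm{diag}(\sqrt{d_i/(d_i+\gamma)})$, the diagonal factors commute and a direct computation gives $\tilde{\rmD}^{-1/2}\rmA\tilde{\rmD}^{-1/2} = \rmP\,\rmM\,\rmP$, so the $\alpha_i$ are the eigenvalues of $\rmP\rmM\rmP$. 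Equivalently, the $\beta_i$ solve $\rmA\rvw = \beta\,\rmD\rvw$ and the $\alpha_i$ solve $\rmA\rvw = \alpha\,\tilde{\rmD}\rvw$, which lets me write $\alpha_1 = \min_{\rvw} R(\rvw)$ and $\alpha_n = \max_{\rvw} R(\rvw)$ with $R(\rvw) = \rvw^\top\rmA\rvw / (\rvw^\top\tilde{\rmD}\rvw)$, and analogously for the $\beta_i$ with $\rmD$ replacing $\tilde{\rmD}$. The no-isolated-node hypothesis keeps $\rmD$ invertible so every normalization is well defined, and Lemma~\ref{lem:0_eig} together with the spectrum of $\boldsymbol{\Delta}_{\text{sym}}$ pins $\beta_n = 1$.

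For the lower bound on $\alpha_1$ I would evaluate $R$ at its minimizer $\rvw$ and split the denominator as $\rvw^\top\tilde{\rmD}\rvw = \rvw^\top\rmD\rvw + \gamma\|\rvw\|^2$. The variational characterization of $\beta_1$ gives $\rvw^\top\rmA\rvw \ge \beta_1\,\rvw^\top\rmD\rvw$, and since $\beta_1 < 0$ this reduces the problem to bounding $\beta_1\cdot \rvw^\top\rmD\rvw/(\rvw^\top\rmD\rvw + \gamma\|\rvw\|^2)$ from below. The scalar map $t\mapsto t/(t+\gamma)$ is increasing and $\rvw^\top\rmD\rvw \le (\max_i d_i)\|\rvw\|^2$, so the fraction is at most $\frac{\max_i d_i}{\gamma+\max_i d_i}$; multiplying by the negative number $\beta_1$ flips the inequality and yields $\alpha_1 \ge \frac{\max_i d_i}{\gamma+\max_i d_i}\beta_1$, which is the first stated inequality.

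For the upper bound I would run the mirror-image argument: at the maximizer, $\beta_n = 1$ gives $\rvw^\top\rmA\rvw \le \rvw^\top\rmD\rvw$, hence $\alpha_n \le \max_{\rvw} \rvw^\top\rmD\rvw/(\rvw^\top\rmD\rvw + \gamma\|\rvw\|^2)$. The genuine difficulty lives entirely in this last maximization. Because $t\mapsto t/(t+\gamma)$ increases, the crude quadratic-form estimate $\rvw^\top\rmD\rvw \le (\max_i d_i)\|\rvw\|^2$ delivers only the weaker constant $\frac{\max_i d_i}{\gamma+\max_i d_i}$, and replacing $\max_i d_i$ by $\min_i d_i$ moves in the wrong direction for an upper bound. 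Extracting the sharper $\frac{\min_i d_i}{\gamma+\min_i d_i}$ demanded by the statement therefore cannot come from a global norm bound on $\rvw^\top\rmD\rvw$; it requires an estimate that resolves how the extremal eigenvector of $\rmP\rmM\rmP$ distributes its mass across vertices of differing degree, so that the effective degree entering the denominator is the small one rather than the large one. I expect exactly this eigenvector-localization step — rather than any of the Rayleigh-quotient bookkeeping above — to be the main obstacle, and closing the gap between the easily-obtained $\max_i d_i$ constant and the claimed $\min_i d_i$ constant is the crux of the lemma.
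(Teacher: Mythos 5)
Your treatment of the first inequality is correct and is, up to notation, exactly the paper's own argument: the paper substitutes $\rvy = \rmD^{1/2}\tilde{\rmD}^{-1/2}\rvx$ with $\lVert\rvx\rVert=1$, observes $\tfrac{\min_i d_i}{\gamma+\min_i d_i}\leq\lVert\rvy\rVert^2\leq\tfrac{\max_i d_i}{\gamma+\max_i d_i}$, and bounds $\alpha_1=\min_{\lVert\rvx\rVert=1}\rvy^\top\rmD^{-1/2}\rmA\rmD^{-1/2}\rvy\geq\beta_1\max_{\lVert\rvx\rVert=1}\lVert\rvy\rVert^2$ using $\beta_1<0$ (which, as in your write-up, follows from $\Tr(\rmD^{-1/2}\rmA\rmD^{-1/2})=0$ and $\beta_n=1$). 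Your generalized-pencil formulation $\rmA\rvw=\alpha\tilde{\rmD}\rvw$ with the monotone map $t\mapsto t/(t+\gamma)$ is the same computation in different clothes; nothing is gained or lost.

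On the second inequality, your suspicion is vindicated, but the diagnosis should be sharper: the obstruction you ran into is not a missing eigenvector-localization lemma --- the printed bound is simply \emph{false} for irregular graphs, so no argument can close your gap. Take the path on three vertices with $\gamma=1$: degrees are $(1,2,1)$, $\tilde{\rmD}=\mathrm{diag}(2,3,2)$, the nonzero entries of $\tilde{\rmD}^{-1/2}\rmA\tilde{\rmD}^{-1/2}$ all equal $1/\sqrt{6}$, and its spectrum is $\{0,\pm 1/\sqrt{3}\}$, whence $\alpha_n=1/\sqrt{3}\approx 0.577 > 1/2 = \tfrac{\min_i d_i}{\gamma+\min_i d_i}$. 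The paper in fact offers no derivation of this half --- it says only that ``one may employ similar steps'' --- and the genuinely similar step (at the maximizer, $\rvy^\top\rmD^{-1/2}\rmA\rmD^{-1/2}\rvy\leq\beta_n\lVert\rvy\rVert^2=\lVert\rvy\rVert^2$, then bound $\lVert\rvy\rVert^2$ by its maximum) yields precisely the constant you called crude: $\alpha_n\leq\tfrac{\max_i d_i}{\gamma+\max_i d_i}$, which the counterexample respects ($2/3\geq 1/\sqrt{3}$) and which agrees with the stated bound only for regular graphs, where $\min_i d_i=\max_i d_i$. So you should state and prove the $\max$-constant version rather than chase the $\min$-constant one. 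The error is harmless downstream: the proof of Theorem~\ref{thm:Lapla_eig_shrink} invokes only the bound on $\alpha_1$, which both you and the paper establish correctly.
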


\begin{proof}

We have shown that 0 is an eigenvalue of $\boldsymbol{\Delta}_{\text{sym}}$. Since $\rmD^{-1/2}\rmA \rmD^{-1/2} = \rmI - \boldsymbol{\Delta}_{\text{sym}}$, then $1$ is an eigenvalue of $\rmD^{-1/2}\rmA \rmD^{-1/2}$. More specifically, $\beta_n = 1$. In addition, by combining the fact that $\Tr(\rmD^{-1/2}\rmA\rmD^{-1/2})=0=\sum_i \beta_i$ with $\beta_n = 1$, we conclude that $\beta_1 < 0$.

By choosing $\rvx$ such that $\lVert \rvx \rVert =1$ and $\rvy=\rmD^{1/2}\tilde{\rmD}^{-1/2} \rvx$, we have that $\|\rvy\|^2=\sum\limits_i\frac{d_i}{d_i+\gamma}x_i^2$ and $\frac{\min_i d_i}{\gamma+\min_i d_i}\leq \|\rvy\|^2\leq \frac{\max_i d_i}{\gamma+\max_i d_i}$. Hence, we use the Rayleigh quotient to provide a lower bound to $\alpha_1$:
\begin{align*}
\alpha_1 & = \min_{\|\rvx\|=1} \left(\rvx^\top\tilde{\rmD}^{-1/2} \rmA \tilde{\rmD}^{-1/2} \rvx \right) \\
& = \min_{\|\rvx\|=1} \left( \rvy^\top\rmD^{-1/2} \rmA \rmD^{-1/2} \rvy \right) \text{(by replacing variable)} \\
&= \min_{\|\rvx\|=1} \left( \frac{\rvy^\top\rmD^{-1/2} \rmA \rmD^{-1/2}\rvy}{\|\rvy\|^2}\|\rvy\|^2 \right) \\
&\geq \min_{\|\rvx\|=1} \left( \frac{\rvy^\top\rmD^{-1/2} \rmA \rmD^{-1/2} \rvy}{\|\rvy\|^2} \right) \max_{\|\rvx\|=1} \left( \|\rvy\|^2 \right) \\
& ( \because  \min (AB) \geq \min (A) \max(B) \text{ if } \min (A) < 0, \forall B > 0,
\\
&\text{ and \quad} 
\min_{\|\rvx\|=1} \left( \frac{\rvy^\top\rmD^{-1/2} \rmA \rmD^{-1/2} \rvy}{\|\rvy\|^2} \right) = \beta_1 < 0 ) \\
&= \beta_1\max_{\|\rvx\|=1} \|\rvy\|^2 \\
&\geq \frac{\max_i d_i}{\gamma+\max_i d_i}\beta_1.\\
 \end{align*}

One may employ similar steps to prove the second inequality in \autoref{eq:bounds_norm_adj}.

\end{proof}

\begin{proof} [Proof of Theorem 1] 
Note that $\tilde{\boldsymbol{\Delta}}_{\text{sym}} = \rmI - \gamma \tilde{\rmD}^{-1} - \tilde{\rmD}^{-1/2}\rmA\tilde{\rmD}^{-1/2}$. Using the results in Lemma \autoref{lem:adj_eig}, we show that the largest eigenvalue $\tilde{\lambda}_n$ of $\tilde{\boldsymbol{\Delta}}_{\text{sym}}$ is
\begin{align} \label{eq:bound_laplacians}
        \tilde{\lambda}_n  & = \max_{\|\rvx\|=1} \rvx^\top(\rmI - \gamma \tilde{\rmD}^{-1} - \tilde{\rmD}^{-1/2}\rmA\tilde{\rmD}^{-1/2})\rvx  \nonumber \\
               & \leq  1 - \min_{\|\rvx\|=1} \gamma \rvx^\top \tilde{\rmD}^{-1} \rvx - \min_{\|\rvx\|=1} \rvx^\top \tilde{\rmD}^{-1/2} \rmA \tilde{\rmD}^{-1/2} \rvx \nonumber \\
               & = 1 - \frac{\gamma}{\gamma + \max_{i} d_i} - \alpha_1 \nonumber \\
               & \leq  1 - \frac{\gamma}{\gamma + \max_{i} d_i} - \frac{\max_i d_i}{\gamma + \max_i d_i} \beta_1 \nonumber \\
               & <  1 - \frac{\max_i d_i}{\gamma + \max_i d_i} \beta_1 \quad (\gamma > 0 \text{ and } \beta_1 < 0) \nonumber \\
               & < 1 - \beta_1 = \lambda_n
\end{align}

\end{proof}

\section{Experiment Details}
\label{sec:exp-details}
\paragraph{Node Classification.}
We empirically find that on Reddit dataset for \method{}, it is crucial to normalize the features into zero mean and univariate. 

\paragraph{Training Time Benchmarking.} We hereby describe the experiment setup of Figure 3.
\citet{FastGCN} benchmark the training time of FastGCN on CPU, and as a result, it is difficult to compare numerical values across reports.
Moreover, we found the performance of FastGCN improved with a smaller early stopping window (10 epochs); therefore, we could decrease the model's training time.
We provide the data underpinning Figure 3 in \autoref{table:citation-time} and \autoref{table:reddit-time}.
\begin{table}[htb!]
\centering
        \small
        \caption{Training time (seconds) of graph neural networks on Citation Networks. Numbers are averaged over 10 runs.}
        \label{table:citation-time}
        \begin{tabular}{l|c|c|c}
        \toprule
        Models & Cora & Citeseer & Pubmed \\ 
        \midrule
        GCN & $0.49$ & $0.59$ & $8.31$ \\
        GAT & $63.10$ & $118.10$ &  $121.74$ \\
        FastGCN & $2.47$ & $3.96 $ & $1.77$ \\
        GIN & $2.09$ &  $4.47$ & $26.15$ \\
        LNet & $15.02$ & $49.16$  & $266.47$ \\
        AdaLNet & $10.15$ & $31.80$ & $222.21$ \\
        DGI & $21.24$ & $21.06$ & $76.20$\\
        {\color{modelblue} \method{}} & $0.13$ & $0.14$ & $0.29$ \\
         \bottomrule
        \end{tabular}
\end{table}
\begin{table}[htb!]
        \centering
        \small
        \caption{Training time (seconds) on Reddit dataset.}
        \label{table:reddit-time}
        \begin{tabular}{l|l}
        \toprule
        Model & Time(s) $\downarrow$ \\
         \midrule
        SAGE-mean & $78.54$\\
        SAGE-LSTM & $486.53$\\
        SAGE-GCN & $86.86$\\
        FastGCN & $270.45$\\
        {\color{modelblue} \method{}} & $2.70$ \\
        \bottomrule
        \end{tabular}
\end{table}
\begin{figure*}[htb] 
\centering
\includegraphics[width=0.9\textwidth]{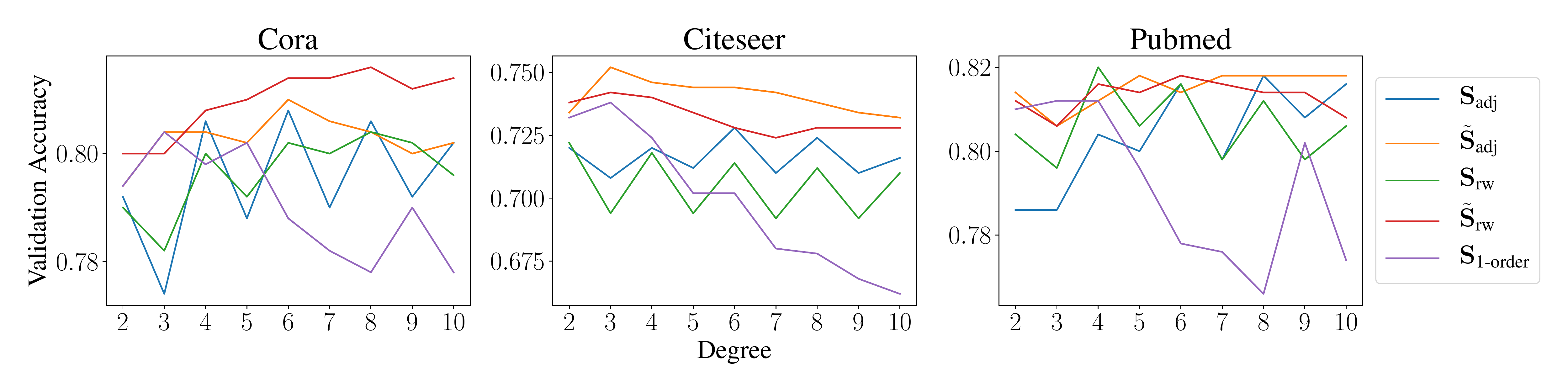}
\caption{Validation accuracy with \method{} using different propagation matrices.}
\label{fig:propagation-ablation}
\end{figure*}
\paragraph{Text Classification.} 
\citet{textGCN} use one-hot features for the word and document nodes. In training SGC, we normalize the features to be between 0 and 1 \textbf{after propagation} and train with L-BFGS for 3 steps. We tune the only hyperparameter, weight decay, using hyperopt\cite{hyperopt} for 60 iterations. Note that we cannot apply this feature normalization for TextGCN because the propagation cannot be precomputed. 
\paragraph{Semi-supervised User Geolocation.}
We replace the 4-layer, highway-connection GCN with a 3rd degree propagation matrix ($K=3$) SGC and use the same set of hyperparameters as \citet{Rahimi18}. All experiments on the GEOTEXT dataset are conducted on a single Nvidia GTX-1080Ti GPU while the ones on the TWITTER-NA and TWITTER-WORLD datasets are excuded with 10 cores of the Intel(R) Xeon(R) Silver 4114 CPU (2.20GHz). Instead of collapsing all linear transformations, we keep two of them which we find performing slightly better possibly due to . Despite of this subtle variation, the model is still linear.
\paragraph{Relation Extraction.}
We replace the 2-layer GCN with a 2nd degree propagation matrix ($K=2$) SGC and remove the intermediate dropout. We keep other hyperparameters unchanged, including learning rate and regularization. Similar to \citet{relation-extraction}, we report the best validation accuracy with early stopping.
\paragraph{Zero-shot Image Classification.}
We replace the 6-layer GCN (hidden size: 2048, 2048, 1024, 1024, 512, 2048) baseline with an 6-layer MLP (hidden size: 512, 512, 512, 1024, 1024, 2048) followed by a SGC with $K=6$. Following \cite{wang2018zero}, we only apply dropout to the output of SGC. Due to the slow evaluation of this task, we do not tune the dropout rate or other hyperparameters. Rather, we follow the GCNZ code and use learning rate of 0.001, weight decay of 0.0005, and dropout rate of 0.5. We also train the models with ADAM~\cite{adam} for 300 epochs.

\section{Additional Experiments}
\paragraph{Random Splits for Citation Networks.}
Possibly due to their limited size, the citation networks are known to be unstable. 
Accordingly, we conduct an additional 10 experiments on random splits of the training set while maintaining the same validation and test sets. 
\begin{table}[th!]
\small
\centering
\caption{Test accuracy (\%) on citation networks (random splits). $^\dagger$We remove the outliers (accuracy $< 0.7/0.65/0.75$) when calculating their statistics due to high variance.}
\label{table:citation-random}
\begin{tabular}{l|c|c|c}
\toprule
 & Cora & Citeseer & Pubmed \\ 
\midrule
\multicolumn{4}{l}{\textbf{Ours:}} \\
GCN &  $80.53 \pm{1.40}$ & $70.67\pm{2.90}$ & $77.09 \pm{2.95}$\\
GIN  & $76.94 \pm 1.24$ & $66.56 \pm 2.27$ & $74.46 \pm 2.19$ \\
LNet & $74.23 \pm 4.50^\dagger$ & $67.26 \pm 0.81^\dagger$ & $77.20 \pm 2.03^\dagger$ \\
AdaLNet & $72.68 \pm 1.45^\dagger$ & $71.04 \pm 0.95^\dagger$ & $77.53 \pm 1.76^\dagger$ \\
GAT  & $82.29 \pm{1.16}$ & $72.6 \pm{0.58}$  & $78.79 \pm{1.41}$ \\
{\color{modelblue} \method{}} & $80.62 \pm{1.21}$ & $71.40 \pm{3.92}$ & $77.02 \pm{1.62} $\\
 \bottomrule
\end{tabular}
\end{table}
\paragraph{Propagation choice.}
We conduct an ablation study with different choices of propagation matrix, namely:
\begin{itemize}
\item[] Normalized Adjacency: $\mathbf{S}_{\text{adj}} = \rmD^{-1/2}\rmA \rmD^{-1/2}$
\item[] Random Walk Adjacency $\mathbf{S}_{\text{rw}} = \rmD^{-1}\rmA$
\item[] Aug. Normalized Adjacency $\tilde{\mathbf{S}}_{\text{adj}} = \tilde{\rmD}^{-1/2}\tilde{\rmA} \tilde{\rmD}^{-1/2}$ \item[] Aug. Random Walk $\tilde{\mathbf{S}}_{\text{rw}} = \tilde{\rmD}^{-1} \tilde{\rmA}$ 
\item[] First-Order Cheby $\mathbf{S}_{\text{1-order}}=(\rmI + \rmD^{-1/2}\rmA \rmD^{-1/2} )$
\end{itemize}

We investigate the effect of propagation steps $K \in \{2..10\}$ on validation set accuracy. 
We use hyperopt to tune L2-regularization and leave all other hyperparameters unchanged. \autoref{fig:propagation-ablation} depicts the validation results achieved by varying the degree of different propagation matrices.

We see that augmented propagation matrices (i.e. those with self-loops) attain higher accuracy and more stable performance across various propagation depths. Specifically, the accuracy of $\mathbf{S}_{\text{1-order}}$ tends to deteriorate as the power $K$ increases, and this results suggests using large filter coefficients on low frequencies degrades \method{} performance on semi-supervised tasks.

Another pattern is that odd powers of $K$ cause a significant performance drop for the normalized adjacency and random walk propagation matrices. This demonstrates how odd powers of the un-augmented propagation matrix use negative filter coefficients on high frequency information. Adding self-loops to the propagation matrix shrinks the spectrum such that the largest eigenvalues decrease from $\approx 2$ to $\approx 1.5$ on the citation network datasets. By effectively shrinking the spectrum, the effect of negative filter coefficients on high frequencies is minimized, and as a result, using odd-powers of $K$ does not degrade the performance of augmented propagation matrices. For non-augmented propagation matrices --- where the largest eigenvalue is approximately 2 --- negative coefficients significantly distort the signal, which leads to decreased accuracy. Therefore, adding self-loops constructs a better domain in which fixed filters can operate. 

\begin{table}[h]
    \centering
    \begin{tabular}{c|cc}
    \toprule
    \# Training Samples & \method{} & GCN \\
    \midrule
    1 & 33.16 & 32.94 \\
    5 & 63.74 & 60.68 \\
    10 & 72.04 & 71.46 \\
    20 & 80.30 & 80.16 \\
    40 & 85.56 & 85.38 \\
    80 & 90.08 & 90.44 \\
    \bottomrule
    \end{tabular}
    \caption{Validation Accuracy (\%) when \method{} and GCN are trained with different amounts of data on Cora. The validation accuracy is averaged over 10 random training splits such that each class has the same number of training examples.} 
    \label{tab:data_ablation}
\end{table}

\paragraph{Data amount.}
We also investigated the effect of training dataset size on accuracy. 
As demonstrated in Table~\ref{tab:data_ablation}, \method{} continues to perform similarly to GCN as the training dataset size is reduced, and even outperforms GCN when there are fewer than $5$ training samples. We reason this study demonstrates \method{} has at least the same modeling capacity as GCN.
%


\end{document}